\documentclass{article} 
\usepackage{iclr2025_conference,times}


\usepackage{amsmath,amsfonts,bm}









\def\eqref#1{equation~\ref{#1}}









\def\1{\bm{1}}










\DeclareMathAlphabet{\mathsfit}{\encodingdefault}{\sfdefault}{m}{sl}
\SetMathAlphabet{\mathsfit}{bold}{\encodingdefault}{\sfdefault}{bx}{n}











\newcommand{\E}{\mathbb{E}}



\usepackage{hyperref}
\usepackage{url}
\usepackage{cleveref}
\usepackage{wrapfig}
\usepackage{multirow}
\usepackage{booktabs}
\usepackage{makecell}

\usepackage[utf8]{inputenc} 
\usepackage[T1]{fontenc}    
\usepackage{hyperref}       
\usepackage{url}            
\usepackage{booktabs}       
\usepackage{amsfonts}       
\usepackage{nicefrac}       
\usepackage{microtype}      
\usepackage{standalone}
\usepackage{latexsym}
\usepackage{amsmath}
\usepackage{amssymb}
\usepackage{amsthm}
\usepackage{natbib}   
\usepackage{graphicx}
\usepackage{subcaption}
\usepackage{array}
\usepackage{tabu}
\usepackage{makecell}
\usepackage{paralist}
\usepackage{cases}
\usepackage{diagbox}
\usepackage{enumitem}
\usepackage{soul}
\usepackage{multirow}
\usepackage{verbatim}
\usepackage{tabulary}
\usepackage{booktabs}
\usepackage{tabularx}
\usepackage[mathscr]{euscript}
\usepackage{mathtools}
\usepackage{stmaryrd}
\usepackage{tikz-dependency}
\usetikzlibrary{automata,decorations.markings,arrows,positioning,matrix,calc,patterns,angles,quotes,calc}
\usepackage{adjustbox}
\usepackage{tabularx}
\usepackage{xspace}
\usepackage{tabulary}
\usepackage{afterpage}
\usepackage{bm}
\usepackage{color}
\usepackage{graphicx}
\usepackage{slashbox}
\usepackage[toc,page]{appendix}
\usepackage{makecell}
\usepackage{boldline}
\usepackage[shortcuts]{extdash}  

\usepackage{blindtext}
\usepackage{graphicx}
\usepackage{capt-of}
\usepackage{booktabs}
\usepackage{varwidth}
\usepackage{pifont}
\usepackage{wrapfig}

\usepackage{listings}

\usepackage{pythonhighlight}

\hypersetup{
	colorlinks=true,
	linkcolor=cyan,
	filecolor=blue,      
	urlcolor=purple,
	citecolor=cyan,
}

\theoremstyle{plain}
\newtheorem{theorem}{Theorem}[section]
\newtheorem{proposition}[theorem]{Proposition}

\theoremstyle{definition}

\theoremstyle{remark}

\NewDocumentCommand{\lifan}
{ mO{} }{\textcolor{cyan}{\textsuperscript{\textit{Lifan}}\textsf{\textbf{\small[#1]}}}}
\NewDocumentCommand{\huayu}
{ mO{} }{\textcolor{blue}{\textsuperscript{\textit{chy}}\textsf{\textbf{\small[#1]}}}}
\NewDocumentCommand{\hao}
{ mO{} }{\textcolor{purple}{\textsuperscript{\textit{hao}}\textsf{\textbf{\small[#1]}}}}
\NewDocumentCommand{\ning}
{ mO{} }{\textcolor{orange}{\textsuperscript{\textit{ning}}\textsf{\textbf{\small[#1]}}}}
\NewDocumentCommand{\ganqu}
{ mO{} }{\textcolor{brown}{\textsuperscript{\textit{ganqu}}\textsf{\textbf{\small[#1]}}}}

\title{Free Process Rewards without Process Labels}


\author{Lifan Yuan$^{1}$\thanks{\hspace{1mm}Equal Contribution. Work done during Wendi's intership at Tsinghua University.} \quad 
Wendi Li$^{2,3*}$ \quad
Huayu Chen$^{2}$ \quad
Ganqu Cui$^{2}$\thanks{\hspace{1mm}Corresponding Authors: \texttt{cgq22@mails.tsinghua.edu.cn, dn97@mail.tsinghua.edu.cn}} \quad
Ning Ding$^2$\footnotemark[2]  \quad
Kaiyan Zhang$^2$ \quad\\
\textbf{Bowen Zhou$^2$} \quad
\textbf{Zhiyuan Liu$^2$} \quad
\textbf{Hao Peng$^1$}\\
$^1$University of Illinois Urbana-Champaign \quad
$^2$Tsinghua University \\
$^3$Huazhong University of Science and Technology \\
\texttt{lifan4@illinois.edu} \quad
\texttt{wendili@hust.edu.cn}
}

%

\iclrfinalcopy 
\begin{document}

\maketitle

\begin{abstract}
\looseness=-1
Different from its counterpart outcome reward models (ORMs), which evaluate the entire responses, a process reward model (PRM) scores a reasoning trajectory step by step, providing denser and more fine grained rewards. However, training a PRM requires labels annotated at every intermediate step, presenting significant challenges for both manual and automatic data collection. This paper aims to address this challenge. Both theoretically and empirically, we show that an \textit{implicit PRM} can be obtained \textit{at no additional cost}, by simply training an ORM on the cheaper \textit{response-level labels}. The only assumption is to parameterize the outcome reward as the log-likelihood ratios of the policy and reference models $r_\theta (\mathbf{y})=\beta \log \frac{\pi_\theta(\mathbf{y})}{\pi_{\text{ref}}(\mathbf{y})}$, which can be optimized regardless of the specific choice of loss objectives. In experiments, we instantiate our implicit PRMs with various objectives and evaluate their performance on MATH. We show that our implicit PRM outperforms a strong MCTS-based baseline \textit{\'a la} Math-Shepherd~\citep{Wang2023MathShepherdVA} using less than $1/38$ of the training data. Its performance can be further improved  with majority voting. We further find that scaling up instructions and responses benefits our implicit PRM, and the latter brings a larger gain. Particularly, we find that our implicit PRM, when instantiated with the cross-entropy (CE) loss, is more data-efficient and can keep improving generation models even when trained with only one response per instruction, the setup that suffers from extreme data scarcity and imbalance. Further, instructions should be relevant to downstream tasks while the diversity of responses does not bring gains. Surprisingly, training on extra Math-Shepherd step labels brings no further improvements to our implicit PRM trained on only outcome data. We hope that our work will encourage a rethinking of PRM training approaches and contribute to making training PRMs more accessible\footnote{Models and data are available at: \url{https://github.com/lifan-yuan/ImplicitPRM}.}.

\end{abstract}

\begin{figure}[bth]
    \centering
    \vspace{-18pt}
    \includegraphics[width=0.75\textwidth]{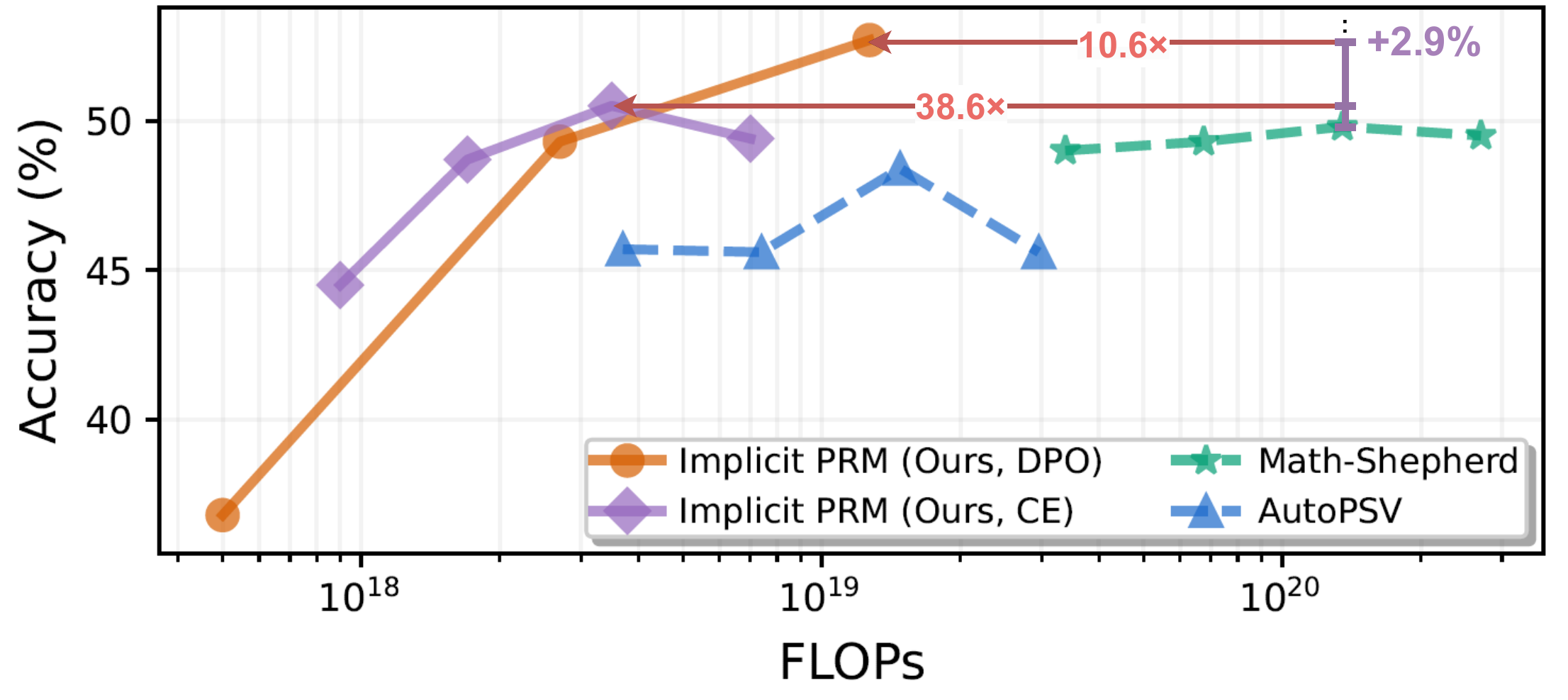}
    \vspace{-10pt}
    \caption{The x-axis indicates the FLOPs required to collect the data and train the model,
    and y axis the accuracies of best-of-64 performance.
    The accuracy is averaged over the best-of-64 accuracies of Mistral-7B-Instruct-v0.2 \citep{Jiang2023Mistral7}, Llama-3.1-8B-Instruct, and Llama-3.1-70B-Instruct \citep{llama3modelcard} on MATH \citep{Hendrycks2021MeasuringMP}.
    Different dots on the same line indicates models trained with the same approach but on different scales of data.
    The top-left zone is desirable in this figure, as it suggests a model can achieve higher performance with less development overhead.
    Our implicit PRM is much cheaper to train while presenting the best performance under the same budget.
    }
    \label{fig:token-acc}
\end{figure}

\section{Introduction}

Training on high-quality supervised data has driven the advances in LLMs development  \citep{ llama3modelcard, ding2023enhancing, Luo2023WizardCoderEC, Yue2024MAmmoTH2SI, Yuan2024AdvancingLR, zhang2024ultramedical}.
Building upon this progress, reward models push the boundaries even further, especially in tasks requiring complex reasoning~\citep{Lightman2023LetsVS, Wang2023MathShepherdVA, Snell2024ScalingLT}.
Outcome Reward Models (ORMs), designed to evaluate full responses,  have been primarily explored, which can be used in both reinforcement learning (RL) and inference. 
However, due to the sparsity of outcome rewards, ORMs often yield suboptimal performance when reranking responses at inference~\citep{Lightman2023LetsVS} and struggle with stability and efficiency during RL training~\citep{Cao2024EnhancingRL, Chan2024DenseRF}.
This highlights the growing demand for \textit{denser and more fine-grained rewards}.
Process Reward Models (PRMs), evaluating intermediate steps to provide fine-grained guidance, naturally meet this need.
Existing work has shown consistent results that PRMs outperform ORMs in best-of-N sampling \citep{Wang2023MathShepherdVA, Snell2024ScalingLT} and RL \citep{Setlur2024RewardingPS}, and argues that scoring every intermediate step provides better transparency and interpretability \citep{Leike2024tweet}.
\looseness=-1
Despite their promise, PRMs are much harder to train than ORMs, since collecting PRM training data requires annotating every intermediate step.
To reduce human efforts, automatic annotation approaches have been proposed, where an intermediate step is labeled based on its estimated probability of leading to a correct outcome.
Typically, this is achieved through either sampling massive look-ahead trajectories to estimate or directly training a verifier to predict Q value, both incurring extensive overhead \citep{Wang2023MathShepherdVA, Lu2024AutoCVER}. 
For example, collecting step-level data through sampling look-ahead trajectories as \cite{Wang2023MathShepherdVA}  requires 38.8$\times$ more FLOPs than training an ORM (\S \ref{sec:main_exp}).

We argue, from both theoretical and empirical perspectives, that building PRMs can be substantially cheaper than previously realized: 
\textbf{a strong PRM can be obtained at no additional cost from training an ORM on the cheaper response-level data,  with a simple reward parameterization.}
Specifically, by parameterizin the reward as the log-likelihood ratio of the policy and the reference models $r_\theta(\mathbf{y})=\beta \log \frac{\pi_\theta(\mathbf{y})}{\pi_{\text{ref}}(\mathbf{y})}$, a common practice in DPO \citep{Rafailov2023DirectPO} and many of its variants \citep{Azar2023IPO, Ethayarajh2024KTOMA, Chen2024NoiseCA, Rosset2024DirectNO, Wu2024SelfPlayPO}, a PRM can be automatically learned during ORM training.
The process reward is then the same log-likelihood ratio, but calculated over a partial response. 
We dub our approach an \textbf{implicit PRM} since it only requries response-level data and ORM training.
Moreover, our insights are agnostic to the specific choice of the training objective,
and are applicable to both DPO and all the variants that adopt the same form of implicit reward;
it further extends to other objectives 
like the Cross-Entropy (CE) loss.
This fresh theoretical insight generalizes the conclusion from \cite{Rafailov2024FromT} that DPO training enables the model to learn the Q function;
practically, our approach is particularly well-suited for scenarios where pairwise data is hard to obtain and algorithms like CE loss remain equally applicable, as shown in \S \ref{sec:scaling}.

\looseness=-1
In experiments, we  train our implicit PRMs 
on a dataset consisting of 33K math instructions and eight solutions for each, and evaluate them through the best-of-N sampling on MATH \citep{Hendrycks2021MeasuringMP}. 
We explore variants of our implicit PRMs instantiated with different training objectives, including DPO, KTO, NCA, and CE.
All produce strong PRMs, outperforming competitive baselines including
our reimplementations of Math-Shepherd \citep{Wang2023MathShepherdVA} and AutoPSV \citep{Lu2024AutoCVER} and six off-the-shelf open ORMs and PRMs, with substantially better trade-offs between accuracy and development overhead, as shown in Figure \ref{fig:token-acc}.
Particularly, when integrated into weighted best-of-N, CE stands as the most effective. 
This makes CE loss appealing in scenarios where pairwise data is hard to collect, since it can handle unpaired and imbalanced data, and is demonstrated to be less data-consuming than DPO in order for an implicit PRM with decent performance.
Further, we find out that our implicit PRM benefits from increased training data, with the scale of responses being more influential than that of instructions.
However, instructions should be relevant to downstream tasks while the diversity of responses does not matter much according to our observations.
Surprisingly, training on step-level data brings no further improvements to our implicit PRMs.
Additionally, despite our implicit PRM remains a language model, its ability to help best-of-N sampling does not translate into its performance on downstream tasks as a policy model. Rather, our worst performing implicit PRM, instantiated by KTO, stands as the only experiencing with an improvement in policy performance.
Finally, we observe that, at least for the models and tasks we consider,  the reference model can be omitted from our implicit PRM, improving the inference efficiency without hurting the accuracy.

Bypassing the need for step labels, our findings substantially lower the data collection and training overhead of building PRMs while delivering stronger performance than existing methods.
We hope that our work will encourage a rethinking of PRM training approaches and contribute to making training PRMs more accessible.

\section{ORMs vs. PRMs: Dilemma of Performance and Expense}
\label{sec:bg_dilemma}
\paragraph{Background}
ORMs assign sparse rewards $r_\theta (\mathbf{y})$ to the entire response, and no feedback is provided until the last token is generated.
In contrast, a PRM assesses the quality of every intermediate step and can provide reward after completing each \citep{Lightman2023LetsVS}.
Given  an instruction and an $n$-step response $\mathbf{y}$  with $y_t$ being the $t$-th step and $\mathbf{y}_{<t}$ being the first $t-1$ steps,
a PRM receives the concatenation of the instruction and the first $t-1$ steps, and assigns a reward to the $t$-th: $r_\theta^t ( \mathbf{y}_{<t}, y_t )$.
The Q value $q_\theta^t( \mathbf{y}_{<t}, y_t )$ indicates the expectation of outcome reward $r_\theta$ conditioned on the observed response $\mathbf{y}_{<t}$ and current step $y_t$. 
\cite{Lightman2023LetsVS} define the process reward as the correctness of each step, while \cite{Wang2023MathShepherdVA} directly consider Q values as process rewards.
We follow \citet{Lu2024AutoCVER} and define process reward as advantages, namely the difference between Q values: $r_\theta^t:= q_\theta^t - q_\theta^{t-1}$.
The benefits of adopting advantages as process rewards have been discussed by concurrent work \citep{Setlur2024RewardingPS}.

\paragraph{PRMs outperformans ORMs in both training and inference 
} 
Both ORMs and PRMs can provide rewards to assess model outputs.
The dense step-level rewards from PRMs lead to stable and effective RL training \citep{Cao2024EnhancingRL, Chan2024DenseRF}, and performs better on reranking responses, with better transparency and interpretability.
Also, ORMs are trained on complete responses, but the value model initialized from it only receives incomplete responses during RL training. On the contrary, PRMs are intrinsically trained to provide dense rewards given partial responses, thus the resulting value models may mitigate out-of-distribution issues that ORMs encounter.

\paragraph{Training PRMs is substantially more expensive than ORMs}
\looseness=-1
Despite its effectiveness, training PRMs is more difficult due to challenges in training data collection. 
To collect training data for PRMs, MCTS is commonly used for automatic step annotation \citep{Wang2023MathShepherdVA, Luo2024ImproveMR}. However, it introduces substantial extra cost.
For MCTS-based step label annotation, a policy model will sample $N$ trajectories based on the concatenation of an instruction $x$ and partial response up to step $t$, each leading to a final answer \citep{Wang2023MathShepherdVA}.
E.g., assuming 10-step rollouts and 8 subsequent trajectories for each step as in \citet{Wang2023MathShepherdVA}, a total of $10\times8=80$ trajectories need to be generated to get step labels for each instruction, which is 80 times more than ORMs.
Therefore, the scaling of PRMs is largely limited.
Besides the overhead of training data collection, this MCTS approach 
can lead to suboptimal performance due to the noisy annotation process, as we will show below and in the experiments.

\paragraph{MCTS estimation is not precise either}
We denote the set of correctness of subsequent trajectories as $\left \{ c_1, c_2, \dots, c_N \right \}$, each element being 0 or 1. Thereafter, two alternative label estimation strategies are available: (1) \textbf{Hard Estimation,} where step $t$ will be labeled as 1 if any rollout is correct and 0 otherwise: $l_t = \max \left \{ c_1, c_2, \dots, c_N \right \}$. (2) \textbf{Soft Estimation,} where step $t$ is labeled as the proportion of correct answers among all rollouts, namely $l_t = \sum_{t=1}^{N} c_t / N$. 
We refer the ORM used to judge the correctness of rollouts as $\theta$, the PRM trained on data from hard estimation as $\theta_{h}$, and the PRM trained on soft estimation data as $\theta_{s}$.
If $\theta_{h}$ and $\theta_{s}$ are perfectly fitted, namely training losses reduced to 0, we have
\begin{equation}
\label{eq:mcts_q}
q_{\theta_{h}}^t\left(\mathbf{y}_{<t}, y_t\right) = \max_{\mathbf{y} \mid \mathbf{y}_{<t}} r_{\theta}(\mathbf{y}), ~~
q_{\theta_{s}}^t\left(\mathbf{y}_{<t}, y_t\right) = \E_{\pi_\text{ref}(\mathbf{y}\mid\mathbf{y}_{\leq t})} r_{\theta}(\mathbf{y})
\end{equation}
However, both estimation strategies may be noisy. Specifically, $q_{\theta_h}^t$ represents the maximum outcome reward $r_\theta$ given $\mathbf{y}_{<t}$, rather than the expectation, thus overestimating the Q value; 
For $q_{\theta_s}^t$, given the limited capability of the policy model in practice, it can be challenging to sample correct solutions for difficult instructions, suffering from false negative noises and thus underestimating Q.

\section{Implicit PRMs For Free Through Reward Parameterization}
\label{sec:theorem}
\looseness=-1
In this section, we show that an ORM can directly represent an expectation of the outcome reward by itself by simple \textbf{reward parameterization}. In other words, a PRM can be inherently derived from the same ORM without any dedicated training, 
offering better performance than MCTS-based approaches with substantially lower overhead.

\paragraph{Reward parameterization in existing work}
Current literature typically parameterize rewards by either
(1) the linear transformation of hidden states, with the reward model being a sequence classifier \citep{Ouyang2022TrainingLM, Touvron2023Llama2O, starling2023, Cui2023ULTRAFEEDBACKBL} 
or (2) generative logits, with reward models being an auto-regressive LM and trained to predict the label of partial or complete responses as ``good'' or ``bad'' tokens, and sometimes a third ``neutral'' \citep{Zhang2024GenerativeVR, Mahan2024GenerativeRM, Lightman2023LetsVS, Wang2023MathShepherdVA, Luo2024ImproveMR}.

Unfortunately, under either of the two parameterizations, PRMs would require expensive step labels to train. To address this issue, \textbf{we propose to train an ORM with implicit reward modeling, which will automatically enable a PRM regardless of the loss functions.} Next, we illustrate this in detail:

\begin{proposition}\label{proposition_1}
(Proof in Appendix \ref{sec:appendix_proof}) Consider an ORM where the reward is parameterized by the log-likelihood ratio of two causal LMs, i.e. $r_\theta(\mathbf{y}):= \beta \log \frac{\pi_\theta(\mathbf{y})}{\pi_\text{ref}(\mathbf{y})}$. 
Define $q_\theta^t(\mathbf{y}_{<t}, y_t):= \sum_{i=1}^{t} \beta \log \frac{\pi_\theta(y_{i}|\mathbf{y}_{<i})}{\pi_\text{ref}(y_{i}|\mathbf{y}_{<i})}$. 
$q_\theta^t$ is the exponential average of $r_\theta$ at step $t$.
\begin{equation}
q_\theta^t(\mathbf{y}_{<t}, y_t) = \beta \log \E_{\pi_\text{ref}(\mathbf{y}|\mathbf{y}_{\leq t})} e^{\frac{1}{\beta}r_\theta(\mathbf{y})}
\end{equation}
Hence, \textbf{$q_\theta^t$ represents an exact expectation of outcome reward $r_\theta$ at step $t$, i.e., the Q value}.
\end{proposition}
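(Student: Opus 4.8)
The plan is to show that both sides of the claimed identity equal $\beta \log \frac{\pi_\theta(\mathbf{y}_{\leq t})}{\pi_\text{ref}(\mathbf{y}_{\leq t})}$, where $\mathbf{y}_{\leq t}$ denotes the prefix consisting of the first $t$ steps. First I would rewrite the definition of $q_\theta^t$ using the autoregressive chain rule. Since $\pi_\theta(\mathbf{y}_{\leq t}) = \prod_{i=1}^t \pi_\theta(y_i \mid \mathbf{y}_{<i})$ and likewise for $\pi_\text{ref}$, the sum in the definition collapses to
\begin{equation}
q_\theta^t(\mathbf{y}_{<t}, y_t) = \sum_{i=1}^t \beta \log \frac{\pi_\theta(y_i \mid \mathbf{y}_{<i})}{\pi_\text{ref}(y_i \mid \mathbf{y}_{<i})} = \beta \log \frac{\pi_\theta(\mathbf{y}_{\leq t})}{\pi_\text{ref}(\mathbf{y}_{\leq t})}.
\end{equation}
This identifies the left-hand side with the log-ratio of prefix marginals and reduces the whole statement to evaluating the expectation on the right.

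Next I would evaluate that expectation. Using the reward parameterization directly gives $e^{\frac{1}{\beta} r_\theta(\mathbf{y})} = \pi_\theta(\mathbf{y}) / \pi_\text{ref}(\mathbf{y})$, so, writing $\mathbf{y} \succeq \mathbf{y}_{\leq t}$ for full responses $\mathbf{y}$ that extend the fixed prefix $\mathbf{y}_{\leq t}$,
\begin{equation}
\E_{\pi_\text{ref}(\mathbf{y} \mid \mathbf{y}_{\leq t})} e^{\frac{1}{\beta} r_\theta(\mathbf{y})} = \sum_{\mathbf{y} \succeq \mathbf{y}_{\leq t}} \pi_\text{ref}(\mathbf{y} \mid \mathbf{y}_{\leq t}) \, \frac{\pi_\theta(\mathbf{y})}{\pi_\text{ref}(\mathbf{y})}.
\end{equation}
I would then substitute the conditional $\pi_\text{ref}(\mathbf{y} \mid \mathbf{y}_{\leq t}) = \pi_\text{ref}(\mathbf{y})/\pi_\text{ref}(\mathbf{y}_{\leq t})$, which cancels the factor $\pi_\text{ref}(\mathbf{y})$ and leaves $\frac{1}{\pi_\text{ref}(\mathbf{y}_{\leq t})} \sum_{\mathbf{y} \succeq \mathbf{y}_{\leq t}} \pi_\theta(\mathbf{y})$. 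Marginalizing $\pi_\theta$ over all completions of the fixed prefix gives $\sum_{\mathbf{y} \succeq \mathbf{y}_{\leq t}} \pi_\theta(\mathbf{y}) = \pi_\theta(\mathbf{y}_{\leq t})$, so the expectation equals $\pi_\theta(\mathbf{y}_{\leq t}) / \pi_\text{ref}(\mathbf{y}_{\leq t})$. Taking $\beta \log$ matches the left-hand side computed above, which closes the argument.

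The main obstacle is bookkeeping rather than any deep estimate: one must keep straight that the conditional expectation ranges over full responses sharing the fixed prefix $\mathbf{y}_{\leq t}$, so that $\pi_\text{ref}(\mathbf{y} \mid \mathbf{y}_{\leq t})$ is genuinely the reference continuation probability $\pi_\text{ref}(\mathbf{y})/\pi_\text{ref}(\mathbf{y}_{\leq t})$, and that the marginalization $\sum_{\mathbf{y} \succeq \mathbf{y}_{\leq t}} \pi_\theta(\mathbf{y}) = \pi_\theta(\mathbf{y}_{\leq t})$ holds because summing an autoregressive distribution over all continuations of a prefix recovers the prefix marginal. If responses have variable length, I would additionally confirm that end-of-sequence normalization does not disturb these two identities, since this is the only point where the completion set must be verified to be properly normalized. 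Everything else is a direct application of the chain rule together with the log-ratio parameterization.
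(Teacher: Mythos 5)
Your proof is correct, but it takes a genuinely different route from the paper's. The paper proves the identity by backward induction on $t$: the base case is $t=T$ (the full response, where the expectation is trivial), and the inductive step uses the tower property $\E_{\pi_\text{ref}(\mathbf{y}|\mathbf{y}_{\leq t})} = \E_{\pi_\text{ref}(y_{t+1}|\mathbf{y}_{\leq t})}\E_{\pi_\text{ref}(\mathbf{y}|\mathbf{y}_{\leq t+1})}$, cancelling one token's ratio at a time via $\sum_{y_{t+1}}\pi_\theta(y_{t+1}|\mathbf{y}_{\leq t})=1$. You instead evaluate both sides in closed form: the chain rule collapses $q_\theta^t$ to $\beta\log\frac{\pi_\theta(\mathbf{y}_{\leq t})}{\pi_\text{ref}(\mathbf{y}_{\leq t})}$, and the expectation is computed in one shot by the change-of-measure cancellation $\pi_\text{ref}(\mathbf{y}\mid\mathbf{y}_{\leq t})\,\frac{\pi_\theta(\mathbf{y})}{\pi_\text{ref}(\mathbf{y})} = \frac{\pi_\theta(\mathbf{y})}{\pi_\text{ref}(\mathbf{y}_{\leq t})}$ followed by marginalizing $\pi_\theta$ over all completions of the prefix. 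Your argument is arguably cleaner and more informative—it exhibits the Q value explicitly as the log-ratio of prefix marginals and makes clear the identity is just importance weighting—and it handles variable-length completions uniformly, where the paper's induction "from token $T$" is stated as if all completions share a common length. The paper's induction, in exchange, only ever needs a single-token normalization sum and makes the per-token telescoping structure explicit. Note that both proofs rest on the same underlying assumption, which you correctly flag: $\pi_\theta$'s conditionals must be properly normalized and the model must terminate with probability one, so that $\sum_{\mathbf{y}\succeq\mathbf{y}_{\leq t}}\pi_\theta(\mathbf{y})=\pi_\theta(\mathbf{y}_{\leq t})$; the paper uses exactly this fact one token at a time.
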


Proposition \ref{proposition_1} indicates that when modeling $r_\theta(\mathbf{y}):= \beta \log \frac{\pi_\theta(\mathbf{y})}{\pi_\text{ref}(\mathbf{y})}$ to train an ORM with the standard pipeline, where $\beta$ is a hyperparameter, $\theta$ can implicitly learn a Q function. Hence, process reward $r_\theta^t$ can be obtained by:
\begin{equation}
\label{eq:process_reward}
r_\theta^t := q_\theta^t - q_\theta^{t-1} = \sum_{i=t-1}^{t} \beta \log \frac{\pi_\theta(y_{i}|\mathbf{y}_{<i})}{\pi_\text{ref}(y_{i}|\mathbf{y}_{<i})}
\end{equation}
Notably, this conclusion still holds when $y_t$ represents the $t$-th token rather than step $t$.
\textbf{This gives us an inspiring hint: we can indeed obtain PRMs, or more fine-grained token-level RMs, simply by collecting response-level data and training an ORM, without any burden of annotating step labels.}
The proposition is agnostic to specific choices of the training objective of ORMs. It can be instantiated with different objectives as vanilla ORM training, with the only difference being substituting the $r_\theta \left( \mathbf{y} \right)$ with $\beta \log \frac{\pi_\theta(\mathbf{y})}{\pi_\text{ref}(\mathbf{y})}$.
Particularly, many existing preference learning algorithms have already met our assumption \citep{Rafailov2023DirectPO, Azar2023IPO, Ethayarajh2024KTOMA, Chen2024NoiseCA, Wu2024SelfPlayPO}.

Besides making PRM training more accessible, our implicit process reward can be more accurate than those derived from $q_{\theta_{s}}^t$ and $q_{\theta_{h}}^t$ in Eq. \ref{eq:mcts_q} \citep{Wang2023MathShepherdVA}, as indicated by the following proposition:

\begin{proposition}\label{proposition_2}
The performance of $q_\theta^t$ is guaranteed by the following conditions:
\textbf{$q_\theta^t$ is bounded by $q_{\theta_{s}}^t$ and $q_{\theta_{h}}^t$, and can reach these bounds with specific values of $\beta$}. That is,
\begin{equation}
q_{\theta_{s}}^t = \E_{\pi_\text{ref}(\mathbf{y}|\mathbf{y}_{<t}) } r_\theta(\mathbf{y}) \leq q_\theta^t(\mathbf{y}_{<t}, y_t) \leq \max_{\mathbf{y}|\mathbf{y}_{<t}} r_\theta(\mathbf{y}) = q_{\theta_{h}}^t
\end{equation}
holds. The left-hand equality is attained as $\beta \to \infty$ and the right-hand one is attained as $\beta \to 0$.
\end{proposition}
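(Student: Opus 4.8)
The plan is to reduce everything to the closed form supplied by Proposition~\ref{proposition_1}, namely $q_\theta^t(\mathbf{y}_{<t}, y_t) = \beta \log \E_{\pi_\text{ref}(\mathbf{y}|\mathbf{y}_{\leq t})} e^{r_\theta(\mathbf{y})/\beta}$, and to read the right-hand side as a scaled log-moment-generating function of the random variable $X := r_\theta(\mathbf{y})$ with $\mathbf{y} \sim \pi_\text{ref}(\cdot \mid \mathbf{y}_{\leq t})$. Writing $\Lambda(\beta) := \beta \log \E[e^{X/\beta}]$, this is precisely the ``soft maximum'' that interpolates between the mean $\E[X]$ and the maximum of $X$, so both inequalities and both limiting equalities should follow from standard convexity and Laplace-type estimates. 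Note that the conditioning discrepancy between $\mathbf{y}_{\le t}$ and $\mathbf{y}_{<t}$ in Eq.~\ref{eq:mcts_q} is immaterial here: the expectation defining $q_{\theta_s}^t$ is taken against the same reference measure that appears inside $\Lambda$, so $X$ carries the same law on both sides.

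For the two bounds I would argue for every fixed $\beta > 0$. The upper bound is pointwise: since $X \le \max_{\mathbf{y}|\mathbf{y}_{<t}} r_\theta(\mathbf{y}) =: M$ almost surely, monotonicity of $\exp$ gives $e^{X/\beta} \le e^{M/\beta}$, hence $\E[e^{X/\beta}] \le e^{M/\beta}$; applying $\beta \log(\cdot)$ yields $q_\theta^t \le M = q_{\theta_h}^t$. The lower bound is Jensen's inequality for the convex map $u \mapsto e^{u/\beta}$: $\E[e^{X/\beta}] \ge e^{\E[X]/\beta}$, so $q_\theta^t \ge \E[X] = q_{\theta_s}^t$. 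This already delivers the sandwich $q_{\theta_s}^t \le q_\theta^t \le q_{\theta_h}^t$.

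It then remains to show the two ends are attained. For $\beta \to \infty$ I would Taylor-expand inside the expectation, $\E[e^{X/\beta}] = 1 + \E[X]/\beta + O(1/\beta^2)$, and use $\log(1+u) = u + O(u^2)$ to obtain $\Lambda(\beta) = \E[X] + O(1/\beta) \to \E[X] = q_{\theta_s}^t$. For $\beta \to 0^+$ I would factor out the maximum: writing $\E[e^{X/\beta}] = e^{M/\beta}\,\E[e^{(X-M)/\beta}]$, each non-maximizing term satisfies $e^{(X-M)/\beta} \to 0$, so the inner expectation tends to the mass $\mathbb{P}(X = M) > 0$ on maximizers, and $\Lambda(\beta) = M + \beta\log \E[e^{(X-M)/\beta}] \to M = q_{\theta_h}^t$.

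The step I expect to be the main obstacle is the $\beta \to 0^+$ limit, a Laplace/Varadhan-type concentration onto the argmax that can be delicate for a continuum of completions, where one must control the tail of $X$ near its essential supremum. In the present setting, however, the response set is finite — a finite vocabulary and bounded length give discrete support — so the factoring argument above is rigorous and elementary: the contribution of support points with $X < M$ is exponentially suppressed and the residual $\beta \log(\cdot)$ vanishes. I would therefore state the result for discrete support, and I would optionally remark that monotonicity of $\Lambda$ in $\beta$, a consequence of convexity of the cumulant generating function, shows the bounds are approached from the interior, consistent with the sandwich established above.
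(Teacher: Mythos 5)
Your proposal is correct, but there is nothing in the paper to compare it against: the appendix proves only Proposition~\ref{proposition_1}, and Proposition~\ref{proposition_2} is stated without any accompanying argument. Your write-up therefore supplies a genuinely missing proof, and it does so along the natural route: read $q_\theta^t$ via Proposition~\ref{proposition_1} as the scaled log-moment-generating function $\Lambda(\beta)=\beta\log\E\bigl[e^{X/\beta}\bigr]$ of $X=r_\theta(\mathbf{y})$ under $\pi_\text{ref}(\cdot\mid\mathbf{y}_{\leq t})$, obtain the upper bound pointwise from $X\leq M$, the lower bound from Jensen's inequality, and the two limiting equalities by Taylor expansion as $\beta\to\infty$ and by factoring out $e^{M/\beta}$ as $\beta\to 0^+$. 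All four steps are sound, and your observation that finite vocabulary and bounded length make the $\beta\to 0^+$ concentration elementary is exactly the right way to discharge the only delicate point. Two details are worth making explicit in a final write-up: (i) the $\beta\to 0^+$ limit needs $\mathbb{P}(X=M)>0$, i.e., the maximum defining $q_{\theta_h}^t$ must be attained by a completion in the support of $\pi_\text{ref}(\cdot\mid\mathbf{y}_{\leq t})$; this holds because a softmax language model assigns positive probability to every finite continuation, but it is an assumption about the model class rather than a triviality, and without it the supremum need not be approached; (ii) the paper's own notation is inconsistent (\eqref{eq:mcts_q} conditions the soft estimate on $\mathbf{y}_{\leq t}$ while Proposition~\ref{proposition_2} writes $\mathbf{y}_{<t}$), and your resolution, namely that both the mean and the soft maximum must be taken over the same conditional reference measure $\pi_\text{ref}(\cdot\mid\mathbf{y}_{\leq t})$ for Jensen to apply, is the reading under which the proposition is true as stated.
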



\looseness=-1
Proposition \ref{proposition_2} demonstrates that $q_\theta^t$ ranges between the soft-estimated and hard-estimated Q values annotated by MCTS-based approaches. The above bounds suggest that our approach has better accuracy and robustness to noises than MCTS-based approaches. 
Specifically, as discussed in \S \ref{sec:bg_dilemma}, $q_{\theta_h}^t$ overestimates the Q value while $q_{\theta_s}^t$ underestimates Q due to false negative noises.
Since $q_\theta^t$ lies between $q_{\theta_h}^t$ and $q_{\theta_s}^t$, it could potentially mitigate both issues and estimate the Q value more accurately.
Concurrent work defines our $q_\theta^t$ as an entropy regularized process reward and has empirically shown its superiority over $q_{\theta_s}^t$ and $q_{\theta_h}^t$ on best-of-N sampling \citep{Zhang2024ERPRM}.

\paragraph{Connection to \cite{Rafailov2024FromT}}
An intuition similar to Proposition \ref{proposition_1} has been brought up by \cite{Rafailov2024FromT}, which demonstrates that DPO enables models to learn the Q function implicitly, but our insights subsume their conclusion since this property is not limited to the DPO algorithm. 
For example, given response-level label $l$, we can further generalize to cross-entropy (CE) loss to handle practical scenarios with unpaired and imbalanced data:
\begin{equation}
\label{eq:ce}
    \mathcal{L}_{CE} = l \cdot \log \sigma \left( \beta \log \frac{\pi_\theta(\mathbf{y})}{\pi_\text{ref}(\mathbf{y})} \right) + (1-l) \cdot \log\left[ 1 - \sigma \left( \beta \log \frac{\pi_\theta(\mathbf{y})}{\pi_\text{ref}(\mathbf{y})} \right) \right]
\end{equation}

\paragraph{Reference Model} One difference between our modeling of rewards and previous ones is the incorporation of a reference model $\pi_\text{ref}$. 
We acknowledge that this comes at an inference cost: to calculate the reward, both the policy and reference model are served, which doubles the inference cost than vanilla PRM.
However, it is prevalent in existing preference learning algorithms and works as the KL constraint to prevent the policy model $\pi_\theta$ deviating too far from its starting checkpoint.
Moreover, it is less a problem in practice, as we will show in \S \ref{sec:ref_practical_cost} that a large proportion of the inference overhead in best-of-N sampling comes from the generation model, especially when the generation model is much larger than the reward model.
Further, we also show in \S \ref{sec:ref_remove_ref} that when the implicit PRM is built from a strong model that has undergone preference learning, such as Llama-3.1-Instruct,
excluding $\pi_{\text{ref}}$ leads to little or no accuracy drop. This makes our approach appealing in practice since it can achieve better accuracy than existing PRMs with exactly the same inference overhead, but substantially lower development overhead.

\section{Experiments}
\looseness=-1
\label{sec:main_exp}

\subsection{Setup}
\paragraph{Evaluation}
Following standard practice \citep{Lightman2023LetsVS}, we evaluate PRMs with best-of-N (BoN) on MATH-500 \citep{Hendrycks2021MeasuringMP}.
To study the generalizability of the PRMs, 
we test each PRM using three generation models with different levels of capabilities:
Mistral-Instruct-v0.3 \citep{Jiang2023Mistral7}, Llama-3.1-8B-Instruct, and Llama-3.1-70B-Instruct \citep{llama3modelcard}.
For each completion, we apply PRMs to score each step and pick the lowest step reward as the score for overall responses.
We also compare the development overhead of the models in terms of FLOPs, including those required in both the automatic data collection and PRM training. 

\paragraph{Training dataset}
Unless stated otherwise, we adopt the following training setup throughout all experiments:
We use math instructions from UltraInteract \citep{Yuan2024AdvancingLR} and sample eight rollouts per instruction using Llama-3.1-8B-Instruct, and then assess rollout correctness with ground truths. 
We train PRMs based on Llama-3.1-8B-Instruct with $\beta=0.05$, which is empirically determined.

\paragraph{Implicit PRM instantiation}
As demonstrated in \S\ref{sec:theorem}, our approach can be instantiated with any reward modeling objective with the reward parameterized as 
$r_\theta := \beta \log \frac{\pi_\theta(\mathbf{y})}{\pi_\text{ref}(\mathbf{y})}$.
We explore various objectives that meet the requirements, including DPO \citep{Rafailov2023DirectPO}, KTO \citep{Ethayarajh2024KTOMA}, NCA \citep{Chen2024NoiseCA}, and the cross-entropy (CE) loss. Please refer to Eq. \ref{eq:ce} for the implementation of CE loss.
For DPO and NCA, we pair each correct rollout with an incorrect counterpart and train our RM on these response-level pairs, while for KTO and CE loss, we directly train on the unpaired and imbalanced rollouts, which is more general in practical scenarios.
We also implement two data balanced setup for CE to analyze the impact of pairwise data, i.e. balancing the positive and negative responses simply for the entire dataset, or more strictly for the each each instruction. We denote the two setups as Dataset-wise Balanced and Instruction-wise Balanceed.

\paragraph{Baselines}
\looseness=-1
Our baselines include our implementation of existing methods and off-the-shelf open models. 
We reimplement Math-Shepherd \citep{Wang2023MathShepherdVA} and AutoPSV \citep{Lu2024AutoCVER} for fair comparisons, representative algorithms in their categories.
Math-Shepherd annotates step labels using MCTS estimations as illustrated in \S \ref{sec:bg_dilemma}.
AutoPSV annotates steps with a two-stage strategy. 
It firsts trains an outcome supervision verifier (OSV) that predicts Q value for each step, then use the OSV to annotate step labels.
A PRM is obtained by continual training on the OSV with process labels.
We also compare to six off-the-shelf ORMs and PRMs, namely EurusRM-7B \citep{Yuan2024AdvancingLR}, SkyworkRM-Llama3.1-8B \citep{liu2024skywork}, ArmoRM-Llama3-8B \citep{ArmoRM}, Math-Shepherd-7B (the offical release of \cite{Wang2023MathShepherdVA}), RLHFlow-8B-Mistral-Data\footnote{\url{https://huggingface.co/RLHFlow/Llama3.1-8B-PRM-Mistral-Data}}, and RLHFlow-8B-DS-Data\footnote{\url{https://huggingface.co/RLHFlow/Llama3.1-8B-PRM-DeepSeek-Data}}.
We note that these off-the-shelf baselines are trained on different instructions and responses, while our two reimplementations are trained on the same data as our implicit PRM.

\subsection{Results}
\begin{table}[]
\centering
\caption{Different reward models' best-of-N sampling performance on MATH test set with three different generation models. When completing instructions with a temperature of 0.5, the three generation models' accuracies  are 9.6\%, 44.6\%, and 63.2\% respectively.}
\label{tab:main-exp}
\resizebox{\textwidth}{!}{
\begin{tabular}{@{}l|l|ccccccccc|c@{}}
\toprule
\multirow{2}{*}{\bf Type}                                                             & \multirow{2}{*}{\bf Reward Model}        & \multicolumn{3}{c|}{\makecell{\bf Mistral-7B-Inst-v0.2\\ \bf Pass@1: 9.6}} & \multicolumn{3}{c|}{\makecell{\bf Llama-3.1-8B-Inst\\ \bf Pass@1: 44.6}}   & \multicolumn{3}{c|}{\makecell{\bf Llama-3.1-70B-Inst\\ \bf Pass@1: 63.2}} & \multirow{2}{*}{\bf Avg.} \\ \cmidrule(lr){3-11}
&                               & @4    & @16  & \multicolumn{1}{c|}{@64}  & @4   & @16  & \multicolumn{1}{c|}{@64}  & @4          & @16         & @64         &                       \\ \midrule
\multicolumn{12}{c}{\it Open-Source Reward Models} \\ \midrule                                                                                  
\multirow{3}{*}{\bf ORM}                                                              & \href{https://huggingface.co/openbmb/Eurus-RM-7b}{EurusRM-7B}                       & 17.2  & 21.0 & \multicolumn{1}{c|}{20.4} & 49.6 & 51.6 & \multicolumn{1}{c|}{51.8} & 69.0        & 69.6        & 72.2        & 46.9                  \\
& \href{https://huggingface.co/Skywork/Skywork-Reward-Llama-3.1-8B-v0.2}{SkyworkRM-Llama3.1-8B}                    & 16.0  & 19.6 & \multicolumn{1}{c|}{23.4} & 49.0 & 50.4 & \multicolumn{1}{c|}{48.2} & 70.4        & 72.6        & 72.0        & 46.8                  \\
& \href{https://huggingface.co/RLHFlow/ArmoRM-Llama3-8B-v0.1}{ArmoRM-Llama3-8B}                        & 16.6  & 21.0 & \multicolumn{1}{c|}{23.2} & 47.8 & 48.6 & \multicolumn{1}{c|}{49.4} & 70.6        & 70.8        & 71.0        & 46.6                  \\ \midrule
\multirow{3}{*}{\bf PRM}                                                              & \href{https://huggingface.co/peiyi9979/math-shepherd-mistral-7b-prm}{Math-Shepherd-7B}  & 16.0  & 21.0 & \multicolumn{1}{c|}{20.4} & 50.0 & 52.4 & \multicolumn{1}{c|}{52.8} & 66.4        & 65.8        & 65.6        & 45.6                  \\
& \href{https://huggingface.co/RLHFlow/Llama3.1-8B-PRM-Mistral-Data}{RLHFlow-8B-Mistral-Data}  & {\bf 19.4}  & {\bf 25.2} & \multicolumn{1}{c|}{\bf 30.2} & 51.8 & 52.0 & \multicolumn{1}{c|}{50.6} & 70.8        & 71.0        & 71.2        & 49.1                  \\
& \href{https://huggingface.co/RLHFlow/Llama3.1-8B-PRM-Deepseek-Data}{RLHFlow-8B-DS-Data} & 17.2  & 23.0 & \multicolumn{1}{c|}{25.2} & {\bf 54.4} & 54.2 & \multicolumn{1}{c|}{55.8} & 68.6        & 70.4        & {\bf 73.0}        & 49.1                  \\ \midrule
\multicolumn{12}{c}{\it Our Implementations} \\ \midrule    
\multirow{2}{*}{\begin{tabular}[c]{@{}l@{}} \bf Baselines\end{tabular}} & {\bf Math-Shepherd}                 & 17.6  & 24.4 & \multicolumn{1}{c|}{26.8} & 50.0 & 51.4 & \multicolumn{1}{c|}{52.8} & 68.6        & 69.4        & 68.8        & 47.8                  \\
& {\bf AutoPSV}                       & 16.6  & 20.6 & \multicolumn{1}{c|}{22.2} & 52.2 & 51.4 & \multicolumn{1}{c|}{52.2} & 68.4        & 65.4        & 62.4        & 45.7                  \\ \midrule
\multirow{6}{*}{\begin{tabular}[c]{@{}l@{}} \bf Implicit PRM\end{tabular}} & \multicolumn{1}{l|}{\textbf{DPO}}                                                                                                     & 18.6                           & 24.4                           & \multicolumn{1}{c|}{28.8}                           & 54.0                           & {\bf 55.4}                          & \multicolumn{1}{c|}{\bf 57.0}                          & 71.8                           & 71.2                           & \multicolumn{1}{c|}{72.2}                          & \textbf{50.4}                            \\
\multicolumn{1}{l|}{}                                                 & \multicolumn{1}{l|}{\textbf{KTO}}                                                                                                     & 15.6                           & 18.4                           & \multicolumn{1}{c|}{18.6}                           & 49.6                           & 51.8                          & \multicolumn{1}{c|}{50.8}                          & {\bf 72.6}                           & 67.0                           & \multicolumn{1}{c|}{67.2}                          & 45.7                                     \\
\multicolumn{1}{l|}{}                                                 & \multicolumn{1}{l|}{\textbf{NCA}}                                                                                                     & 18.6                           & 23.8                           & \multicolumn{1}{c|}{28.0}                           & 52.4                           & 53.4                          & \multicolumn{1}{c|}{55.2}                          & 69.0                           & {\bf 73.0}                           & \multicolumn{1}{c|}{71.6}                          & 49.4                                     \\
\multicolumn{1}{l|}{}                                                 & \multicolumn{1}{l|}{\textbf{CE}}                                                                                                      & 18.8                           & 24.0                           & \multicolumn{1}{c|}{28.0}                           & 52.6                           & 54.4                          & \multicolumn{1}{c|}{53.0}                          & 70.6                           & 67.0                           & \multicolumn{1}{c|}{67.2}                          & 48.4                                     \\
\multicolumn{1}{l|}{}                                                 & \multicolumn{1}{l|}{\textbf{CE (Dataset-wise Balanced)}}                                                                              & 18.0                           & 23.6                           & \multicolumn{1}{c|}{27.0}                           & 52.6                           & 54.2                          & \multicolumn{1}{c|}{52.6}                          & 68.6                           & 66.8                           & \multicolumn{1}{c|}{67.0}                          & 47.8                                     \\ 
\multicolumn{1}{l|}{}                                                 & \multicolumn{1}{l|}{\textbf{CE (Inst.-wise Balanced)}}                                                                          & 17.6                           & 22.6                           & \multicolumn{1}{c|}{26.2}                           & 52.6                           & 55.2                          & \multicolumn{1}{c|}{54.6}                          & 69.4                           & 71.2                           & \multicolumn{1}{c|}{72.0}                          & 49.0                                     \\ \bottomrule
\end{tabular}
}
\end{table}

\paragraph{Various implicit reward modeling objectives outperform baselines}
\looseness=-1
According to BoN results shown in Table \ref{tab:main-exp}, all four variants of our implicit PRMs
consistently improve the accuracies of the three different generation models. 
Among them, DPO achieves an averaged accuracy of 50.4, performing better in general, closely followed by NCA with an averaged accuracy of 49.4.
CE presents strong performance, despite that it is trained on unpaired and imbalanced data. 
Specifically, with an averaged accuracy of 48.4, it beats our implemented Math-Shepherd and AutoPSV by 0.6 and 2.7 respectively, and outperforms other open-source reward models except RLHFlow-8B-Mistral-Data and RLHFlow-8B-DS-Data, both of which achieves 49.1.
This indicates the potential in empowering real-world applications where pairwise data is hard to collect.
Nevertheless, according to CE versus CE (Inst.-wise Balanced), it is still beneficial to have balanced positive and negative responses for each instruction in the training dataset, which aligns with conventional understandings on CE as a classification loss. However, comparing CE (Dataset-wise Balanced) to CE, simply balancing the entire dataset by randomly filtering examples of the class with more data can be detrimental.

\begin{figure}[tbh]
    \centering
    \includegraphics[width=.85\linewidth]{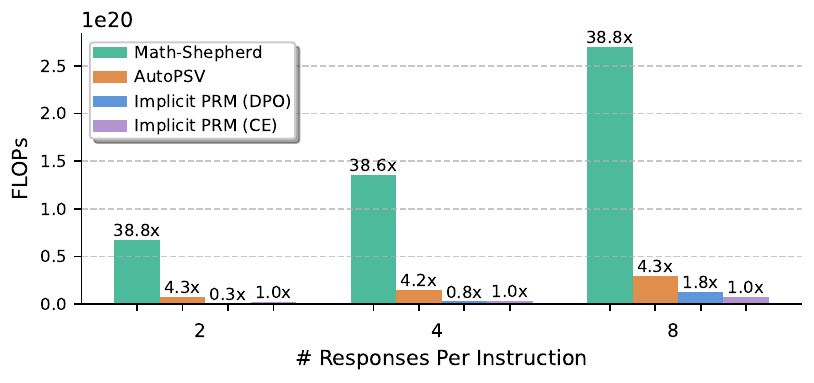}
    \vspace{-10pt}
    \caption{
    Overhead of developing different PRMs, in terms of FLOPs during data collection and training.
    The X axis indicates the number of responses per instruction which determines the scale of training data, and the Y axis is the number of FLOPs. Our implicit PRM always consumes the least FLOPs compared to baselines, with CE being 38.6$\times$ to 38.8$\times$ more efficient than Math-Shepherd across different dataset scales.
    }
    \label{fig:tokens}
\end{figure}
\paragraph{Our Implicit PRMs reduce the overhead of data collection and training by $38.8\times$
}
\label{sec:efficiency}

As shown in Figure \ref{fig:tokens}, \textbf{with three different training data scales. Math-Shepherd generally costs 38.8x more FLOPs than the implicit PRM (CE).} Compared to implicit PRM (DPO), the number becomes 146.5x, 49.9x, and 21.3x under different number of responses per instruction respectively.

We plot the scaling trends of the average performance of each method with corresponding number of tokens consumed in Figure \ref{fig:token-acc}, from which we can clearly see that our implicit PRMs achieve better performance with much less data collection and training overhead.

\section{Analysis}
\begin{wrapfigure}{r}{0.45\textwidth}
    \centering
    \vspace{-7pt}
    \includegraphics[width=0.45\textwidth]{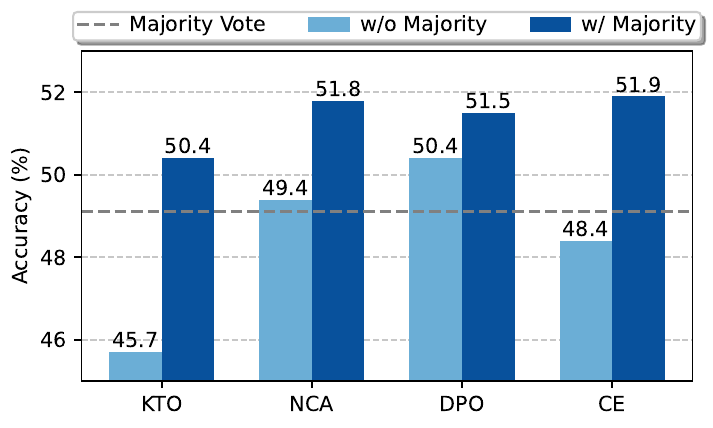}
    \vspace{-23pt}
    \caption{\looseness=-1
    Results with majority voting. We present the averaged best-of-N accuracy across three testsets.}
    \label{fig:label}
    \label{fig:sc}
    \vspace{-18pt}
\end{wrapfigure}

\subsection{Incorporating Majority Voting}
\looseness=-1
Our implicit PRMs can be integrated with majority voting to improve the performance even further.
Previously, we apply our implicit PRMs to score each response and pick the response with highest individual score as the final answer. However, when incorporating with majority voting, the scores of responses that lead to the same answer will be aggregated and the answer with the highest aggregated score will be selected as the final answer.
We present the results averaged over different numbers of candidate solutions per problems across all three generated models in Figure~\ref{fig:sc}.

We observe that our implicit PRM can successfully adjust voting distributions, and achieves better results than using the implicit PRM or majority voting separately.
Particularly, KTO and CE variants gain the most from the integration, both of which fail to surpass majority voting alone but outperforms it through weighted best-of-N. 
It is also noteworthy that CE loss become the most effective when augmented with majority voting, once again demonstrating its potential.

\subsection{Scaling Up Instructions and Responses can Improve Implicit PRMs}
\label{sec:scaling}
\begin{figure}[tbh]
    \centering
    \begin{subfigure}{\textwidth}
        \includegraphics[width=\linewidth]{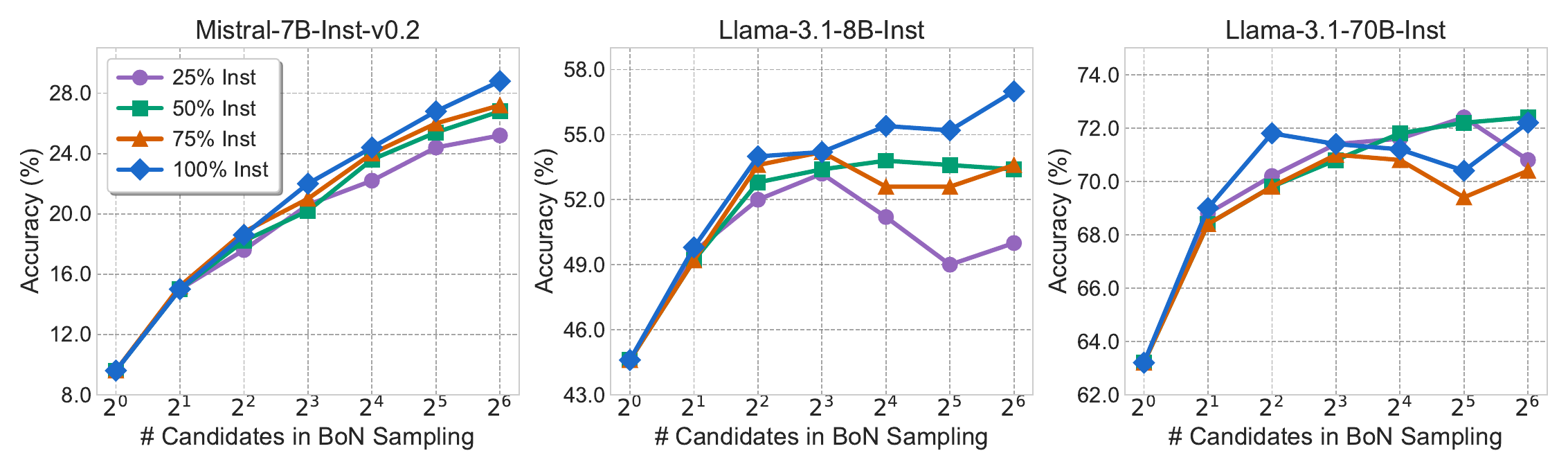}
        \caption{Implicit PRM (DPO).}
    \end{subfigure}

    \begin{subfigure}{\textwidth}
        \includegraphics[width=\linewidth]{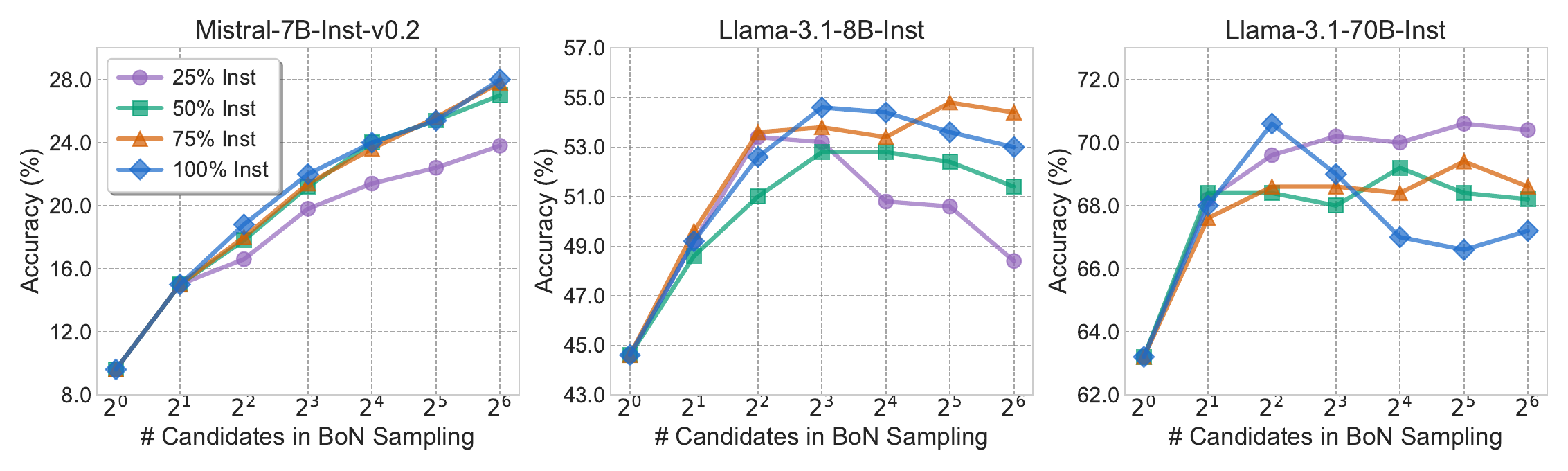}
        \caption{Implicit PRM (CE).}
    \end{subfigure}
    
    \caption{Scaling instruction numbers. Our implicit PRM's performance on Mistral-7B-Instruct-v0.2 and Llama-3.1-8B-Instruct scales well with the number of instructions, despite the trend is more complex on Llama-3.1-70B-Instruct.}
    \label{fig:inst_scaling}
\end{figure}
\begin{figure}[tbh]
    \centering

    \begin{subfigure}{\textwidth}
        \includegraphics[width=\linewidth]{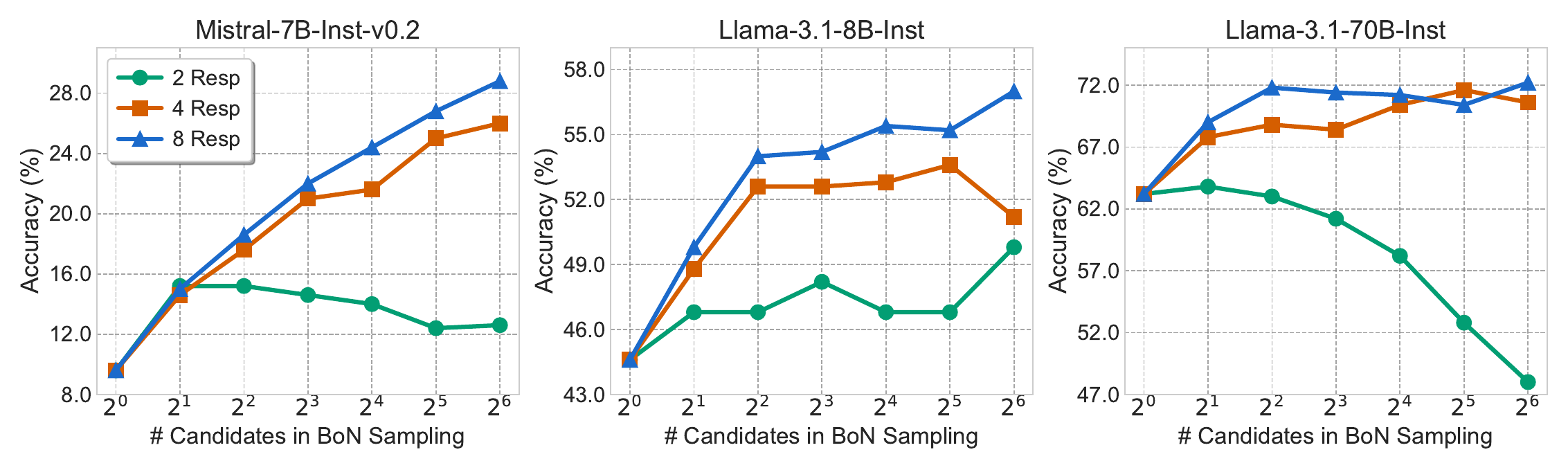}
        \caption{Implicit PRM (DPO).}
        \label{fig:resp_scaling_dpo}
    \end{subfigure}
    
    \begin{subfigure}{\textwidth}
        \includegraphics[width=\linewidth]{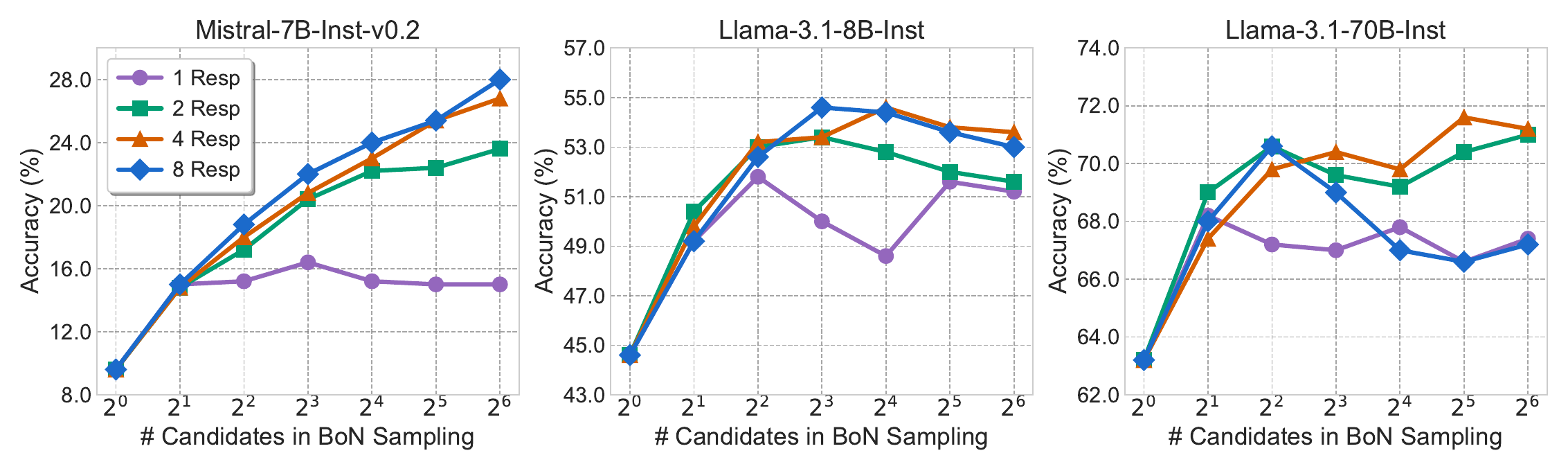}
        \caption{Implicit PRM (CE). Note that one repsonse per instruction is the extreme case of the unpaired setup.}
        \label{fig:resp_scaling_ce}
    \end{subfigure}
    
    \caption{\looseness=-1
    Scaling responses number for each instruction. Our implicit PRM generally benefits from scaling up the number of responese for each instruction. Particularly, DPO is under-trained with two responses per instruction. This can be partly attributed to the insufficient amount of instructions: two responses may not constitute a pair to train our DPO variant, and thus many instructions can not be used in training.
    In contrast, CE generally performs better with insufficient data and can always improve different generation model, even when it is trained with one response per instruction with pairs.}
    \label{fig:resp_scaling}
\end{figure}

\paragraph{Setup}
We conduct scaling analysis with DPO and CE on both instructions and responses of the training dataset. For instruction scaling, we randomly down sample 25\%, 50\%, and 75\% instructions to train our implicit PRM. 
For response scaling, since DPO can only train on paired responses, we train models with 2, 4, and 8 rollouts respectively; while for CE, we also implement training with \textit{only one rollout per instruction}, the extreme case of unpaired setup. 

\paragraph{Results}
We present results in Figure \ref{fig:inst_scaling} and Figure \ref{fig:resp_scaling} respectively. Takeaways are summarized as follows:
(1)
\textbf{Scaling instructions and responses consistently improve the performance of our implicit PRM.} The trend is particularly clear on Mistral-7B-Inst-v0.2 and Llama-3.1-8B-Inst, but there are also a few outliers on Llama-3.1-70B-Inst.
(2)
\textbf{Compared to instructions, scaling up responses seems to be more influential on implicit PRMs}, as reflected by the larger performance variations between the minimum and maximum data setups.
Taking a closer look at the response scaling, 
(3)
\textbf{DPO requires more data to obtain a descent performance than CE.}
From Figure \ref{fig:resp_scaling_dpo}, DPO is under-trained with two responses per instruction, which can be partly attributed to the insufficient amount of instructions: two responses may not constitute a pair to train our DPO variant, and thus many instructions can not be used in training.
In contrast, CE generally performs better with insufficient data and can always improve different generation model, even when it is trained with one response per instruction with pairs, the extreme case of the unpaired setup.
This presents a huge advantage in real-worl data scarcity scenarios.

\subsection{Are There Any Other Factors can Improve Implicit PRM Performance?}
We consider potential factors that may influence the performance of implicit PRMs, as listed below:

\paragraph{Task-irrelevant Instructions}
We previously only consider math instructions. We now examine if increasing instructions diversity, even if the instructions are irrelevant to downstream tasks, can benefit implicit PRMs. 
To this end, we incorporate general instructions from UltraFeedback \citep{Cui2023ULTRAFEEDBACKBL} and coding instructions from UltraInteract \citep{Yuan2024AdvancingLR} into our training dataset. We directly use responses from the original datasets, but for UltraFeedback we only randomly select one pair for each instruction, instead of using all the pairs.

\paragraph{Response Diversity}
We first conduct a deduplication on our preference dataset based on 8-gram overlap, aiming to verify if repeated responses hurt model performance.
We then randomly replace four rollouts per instruction in the original training dataset with another four rollouts generated by Llama-3.1-8B-Base model.

\looseness=-1
\paragraph{Training on Step Labels}
Our implicit PRMs do not  require step labels for training. 
However, we are interested in exploring whether augmenting them with step labels can further improve their performance. 
Based on the definition of process labels, we adjust the implicit reward of a step by increasing it for positive labels and decreasing it for negative ones. 
We use the labels obtained from our implemented Math-Shepherd, which has been demonstrated to be a strong implementation with step labels of high-quality (\S \ref{sec:main_exp}).
We adapt KTO to a step-level version for optimization. Therefore, considering a $n$-step response with step labels $\{l^1, l^2, \dots, l^n\}$, we conduct a \textit{second stage} training on our current implicit PRM to explicitly optimize the implicit reward:~$\mathcal{L_\theta}  = - \frac{1}{n} \sum_{t=1}^{n} \log \left( \sigma \left(l^t \cdot  \left | r^t_\theta \right |  \right )\right )$.
\begin{table}[]
\centering
\caption{Factors that may affect PRM performance. To our surprise, none of them consistently improve our implicit PRM.}
\label{tab:factors}
\vspace{-7pt}
\resizebox{0.9\textwidth}{!}{
\begin{tabular}{@{}l|ccc|ccc|ccc|c@{}}
\toprule
\multirow{2}{*}{Setup} & \multicolumn{3}{c|}{Mistral-7B-Inst-v0.2} & \multicolumn{3}{c|}{Llama-3.1-8B-Inst} & \multicolumn{3}{c|}{Llama-3.1-70B-Inst} & \multirow{2}{*}{Avg.} \\ \cmidrule(lr){2-10}
                       & @4            & @16           & @64           & @4           & @16          & @64          & @4            & @16          & @64          &                       \\ \midrule
Implicit PRM           & 18.6          & 24.4          & 28.8          & 54.0         & 55.4         & 57.0         & 71.8          & 71.2         & 72.2         & 49.3                  \\ \midrule
+ UltraFeedback        & 19.4          & 24.4          & 29.0          & 53.8         & 55.0         & 55.8         & 71.6          & 70.6         & 72.2         & 49.2                  \\
+ UltraInteract (Code) & 19.2          & 24.6          & 28.0          & 54.6         & 54.0         & 56.8         & 71.4          & 70.8         & 70.0         & 49.2                  \\ \midrule
+ Dedup.               & 18.2          & 22.8          & 26.8          & 52.0         & 53.2         & 51.6         & 69.8          & 69.4         & 70.4         & 47.6                  \\
+ Base Resp.           & 17.8          & 23.2          & 27.6          & 54.0         & 55.0         & 54.8         & 71.4          & 72.4         & 73.2         & 48.7                  \\
+ Step Label           & 18.8          & 25.4          & 28.8          & 53.8         & 54.8         & 54.6         & 70.8          & 71.2         & 73.0         & 49.2                  \\ \bottomrule
\end{tabular}
}
\end{table}

\paragraph{Results}
We present results on implicit PRM (DPO) in Table \ref{tab:factors}. In general, \textbf{none of these factors brings consistent gains.}
(1)
Both adding UltraFeedback and UltraInteract (code) instructions hurt the performance, with the former suffers more severely. This implies that training instructions deviating from the downstream task could undermine the performance of implicit PRMs.
(2)
Regarding response diversity, we observe that the performance of deduplicating responses hurts the performance and is close to implicit PRMs trained on similar amount of data. This indicates that repeated responses function similarly as others and are still beneficial before model performance saturates. 
Replacing part of original rollouts with those generated by the base model also fails to improve performance.
(3)
Conducting step-level KTO with extra process labels does not bring gains, reinforcing our claim that we can already train a strong PRM without process label. 
However, one should be cautious about concluding that stepwise labels are generally not helpful due to two factors in our experiments:
Firstly, despite our efforts that lead to improved step annotation quality compared to previous work, the MCTS-based approach inevitably introduces noises in the data annotation process, as we discussed in \S \ref{sec:bg_dilemma}; 
Secondly, our choice of algorithm may not be optimal. It is possible that more advanced PRM data annotation methods and training algorithms can finally integrate information from (noisy) stepwise labels into implicit PRM.

\subsection{PRM Ability Does Not Translate into Policy Performance}
\begin{wraptable}{r}{0.28\textwidth}
\centering
\vspace{-10pt}
\caption{
\looseness=-1
Implicit PRMs' performance on MATH500 when used to solve the problems directly.}
\vspace{-7pt}
\label{tab:policy}
\resizebox{0.28\textwidth}{!}{
\begin{tabular}{@{}l|c@{}}
\toprule
Model             & Accuracy \\ \midrule
Llama-3.1-8B-Inst & 45.2     \\
~+ DPO             & 25.8     \\
~+ KTO             & 46.6     \\
~+ NCA             & 35.6     \\
~+ CE              & 28.6     \\ \bottomrule
\end{tabular}
}
\vspace{-15pt}
\end{wraptable}
Implicit PRM is trained in an auto-regressive manner, sometimes directly using preference learning algorithms, which are primarily used to improve policy models.
Therefore, it reserves the nature as a causal LM and can still serve as a policy model to solve downstream problems directly.
In this section, we test on MATH500 \citep{Hendrycks2021MeasuringMP, Lightman2023LetsVS} to analyze the correlation between their PRM ability and performance as a policy model. 

According to Table \ref{tab:policy}, only trainiing with KTO leads to an improvement on MATH500, compared to Llama-3.1-8B-Instruct. Interestingly, based on Table \ref{tab:main-exp}, KTO performs the worst as an implicit PRM. In contrast, DPO and CE, the two algorithms that perform the best in without majority voting and with majority voting setups, respectively, achieve the lowest accuracies. This indicates that PRM ability does not improve as the policy model improves, and there can even be an unexpected trade-off between the both abilities.

\subsection{Can We Reduce the Inference Overhead of the Reference Model?}
\label{sec:ref}
One concern on our approach is the need of an additional reference model at inference. 
However, we show that the the reference model does not double overall inference overhead in practice, especially when the generation model is much larger than the reward model (\S \ref{sec:ref_practical_cost}). Next, in \S \ref{sec:ref_remove_ref}, we show that the reference model can be removed at inference in certain cases.

\subsubsection{The Reference Model Does not Double Overall Inference Overhead}
\label{sec:ref_practical_cost}

\paragraph{Setup}
\looseness=-1
We calculate the time costs of best-of-N sampling on MATH500 in practice. The entire process includes (1) generating multiple candidate solutions to the instruction using the generation model, and (2) scoring each candidate using a PRM. We use vLLM \citep{kwon2023efficient} to implement the former and Huggingface Accelerate \citep{accelerate} for the latter.

\begin{wraptable}{r}{0.7\textwidth}
\centering
\vspace{-11pt}
\caption{GPU time costs during best-of-N sampling relative to the cost of generation model (\%). The overall inference overhead of baselines on three test sets are 66.6\%, 70.8\%, and 90.9\% of that of our implicit PRM, respectively. Namely, the reference model does not double the inference cost in practice, and the extra inference overhead becomes more marginal as the generation model gets larger.}
\label{tab:ref_time_cost}
\vspace{-8pt}
\resizebox{0.7\textwidth}{!}{
\begin{tabular}{@{}l|l|c|c|c@{}}
\toprule
Source of Cost                & Method       & Mistral-7B-Inst-v0.2 & Llama-3.1-8B-Inst & Llama-3.1-70B-Inst \\ \midrule
Generation Model              & -            & 100.0               & 100.0            & 100.0            \\ \midrule
\multirow{2}{*}{Reward Model} & Baselines    & 33.5               & 29.4            & 9.1             \\
                              & Implicit PRM & 201.6              & 141.7            & 22.2            \\ \midrule
\multirow{2}{*}{Total}        & Baselines    & 200.9              & 171.1            & 111.1            \\
                              & Implicit PRM & 301.6              & 241.7           & 122.2            \\ \bottomrule
\end{tabular}
}
\vspace{-15pt}
\end{wraptable}
\paragraph{Results}
We present the GPU time costs on A100 80G relative to that of the generation model in Table \ref{tab:ref_time_cost}. We find that the inference overhead from generation model takes a large proportion of the total overhead, especially when the generation model is much larger than the reward model.
Therefore, the reference model in our implicit PRM does not double the overall inference cost in practice: The overall inference overhead of baselines on three test sets are 66.6\%, 70.8\%, and 90.9\% of that of ours, respectively. 
It is noteworthy that the extra overhead introduced by the reference model becomes more marginal as the generation model b larger, and is almost negligible when Llama-3.1-70B-Instruct serves as the generation model.

\subsubsection{The Reference Model Can be Removed at Inference in Certain Cases}
\label{sec:ref_remove_ref}

We note that our proposition still holds under a uniformly distributed reference model, i.e. $\log \pi_\text{ref} = constant$. In best-of-N sampling, only relative scores between steps or responses matter, where the constant $\log \pi_\text{ref}$ can be canceled out, equivalent to exclude the reference model in reward parametrization.
Therefore, we derive a more efficient implementation of our proposition by removing the reference model.
We examine its effectiveness and explore if we can simply our method to reduce the inference overhead in practice.

\begin{table}[t]
\centering
\caption{Ablating reference model in both training and inference. Neither consistently hurts our implicit PRM. More surprisingly, the reference model, Llama-3.1-8B-Instruct, already perfroms well on Best-of-N sampling.}
\vspace{-5pt}
\resizebox{\textwidth}{!}{
\begin{tabular}{@{}ll|ccc|ccc|ccc|c@{}}
\toprule
\multicolumn{2}{l|}{Setup}                                      & \multicolumn{3}{c|}{Mistral-7B-Inst-v0.2} & \multicolumn{3}{c|}{Llama-3.1-8B-Inst} & \multicolumn{3}{c|}{Llama-3.1-70B-Inst} & \multirow{2}{*}{Avg.} \\ \cmidrule(r){1-11}
\multicolumn{1}{l|}{Train}                          & Inference & @4           & @16          & @64         & @4          & @16         & @64        & @4          & @16         & @64         &                       \\ \midrule
\multicolumn{1}{l|}{Llama-3.1-8B-Instruct}          & w/o Ref   & 14.8         & 16.2         & 18.4        & 49.0        & 50.4        & 52.2       & 69.6        & 71.0        & 71.0        & 45.8                  \\ \midrule
\multicolumn{1}{l|}{\multirow{2}{*}{+ DPO w/ Ref}}  & w/ Ref    & 18.6         & 24.4         & 28.8        & 54.0        & 55.4        & 57.0       & 71.8        & 71.2        & 72.2        & 50.4                  \\
\multicolumn{1}{l|}{}                               & w/o Ref   & 17.8         & 23.4         & 27.8        & 54.2        & 56.6        & 57.6       & 71.6        & 73.6        & 73.2        & 50.6                  \\ \midrule
\multicolumn{1}{l|}{\multirow{2}{*}{+ DPO w/o Ref}} & w/ Ref    & 17.8         & 23.4         & 28.4        & 54.0        & 55.2        & 57.6       & 70.6        & 72.0        & 73.2        & 50.2                  \\
\multicolumn{1}{l|}{}                               & w/o Ref   & 17.4         & 22.6         & 25.6        & 54.8        & 56.4        & 58.2       & 70.4        & 73.2        & 74.0        & 50.3                  \\ \bottomrule
\end{tabular}
}
\label{tab:ref}
\end{table}
\paragraph{Setup}
To this end, we explore two model training configurations: parameterizing the outcome reward either with or without a reference model. . 
We then apply both models to best-of-N sampling and evaluate whether including the reference model has any impact to the performance.
We also compare to directly using Llama-3.1-8B-Instruct, the reference model in our implicit PRM in previous experiments, as the reward model.
It serves as a controlled baseline without any RM training on our data, but has undergone preference learning~\citep{llama3modelcard}.

\paragraph{Results}
\looseness=-1
Surprisingly, no performance degradation is observed when the reference model is ablated in both training and inference, suggesting a more practically efficient variant of our approach.
Besides, Llama-3.1-8B-Instruct achieves strong performance too. 
This potentially explains why the reference model can be removed: The reference model is already capable of appropriately assigning high rewards to ``good'' steps and low ones to ``bad'' steps. 
Recall the process reward is $\sum_{i=t-1}^{t} \beta \log \pi_\theta(y_{i}|\mathbf{y}_{<i})/\pi_\text{ref}(y_{i}|\mathbf{y}_{<i})$.
Intuitively, a good step might receive high probabilities by both $\pi_\theta$ and $\pi_\text{ref}$, and therefore lowering its reward;
on the other hand, a bad step might receive low  probabilities by both, thereby increasing its reward.
This creates confusion to the PRM.
We argue that this behavior is actually beneficial during RL training: when the reference model $\pi_\text{ref}$ already performs well on certain actions, smaller rewards and consequently smaller policy gradients prevent over-training the policy model $\pi_\theta$ on these already-optimized actions.
Nevertheless, it is undesired on such inference-time response selection tasks.
This suggests that our implicit PRM is particularly appealing in practice, since most of the time practitioners will build their PRMs from a strong reference model such as Llama-3.1-8B-Instruct.
In such cases, $\pi_\text{ref}$ can be dropped in inference without hurting the performance as the above results suggest, and \textbf{our approach can achieve stronger performance than baselines with substantially cheaper training, without introducing any additional inference overhead.}

\section{Related Work}


\paragraph{Complex Reasoning of LLMs} Complex reasoning has become a key capability of Large Language Models (LLMs) yet remains challenging even to state-of-the-art ones \citep{Jimenez2023SWEbenchCL, Tian2024SciCodeAR}. 
Various techniques have been explored to improve LLMs on reasoning throughout different stages of their lifecycles, such as pre-training \citep{Azerbayev2023LlemmaAO, paster2023openwebmath, Li2023StarCoderMT}, post-training \citep{Luo2023WizardCoderEC, Yue2024MAmmoTH2SI, Yuan2024AdvancingLR, llama3modelcard, Ouyang2022TrainingLM}, and inference \citep{Wei2022ChainOT, Fu2022ComplexityBasedPF, Hao2023ReasoningWL, Lightman2023LetsVS}.
Among them, the process reward model (PRM) \citep{Lightman2023LetsVS}, which scores model outputs step by step, has attracted recent attention for its effectiveness in a variety of settings.


\paragraph{Implicit Reward}
Implicit reward has already been widely adopted in preference learning.
Despite primary work mainly focus on applying these algorithms to align models on top of supervised fine-tuning \citep{Rafailov2023DirectPO, Azar2023IPO, Ethayarajh2024KTOMA, Chen2024NoiseCA, Rosset2024DirectNO, Wu2024SelfPlayPO}, recent work also tries to leverage the implicit rewards of resulting models as outcome rewards \citep{Lambert2024RewardBenchER, Zhong2024DPOMP, Hosseini2024VSTaRTV}. Further, following \cite{Rafailov2024FromT}, which showed that DPO can automatically learn a Q function, \cite{Qiu2024TreeBoNEI} devise a self-guided decoding algorithm limited for DPO models leveraging such property. 
However, despite these applications of adopting DPO models as off-the-shelf reward models or Q functions, none of existing work specifically targets improving such ability or investigating how to derive decent PRMs upon those off-the-shelf models.

\section{Conclusion}
We start with a theoretical proposition demonstrating that parameterizing the outcome reward as the log-likelihood ratios of the policy and reference models $\log \frac{\pi_\theta(y)}{\pi_{\text{ref}}(y)}$, a PRM can be intrinsically learned at the same time without any extra training requirements.
We discuss its universality to instantiate different training objectives.
In experiments, we demonstrate that various implicit reward modeling objectives outperform baselines on MATH, with substantially better trade-offs between accuracy and development overhead, particularly the CE loss.
The performance of implicit PRMs can be further improved with majority voting. 
Further, scaling up instructions and responses benefit our implicit PRM, with the latter having a larger effect, but instructions should be relevant to downstream tasks while the diversity of responses does not bring gains.
Surprisingly, training on extra Math-Shepherd step labels brings no further improvements to our implicit PRM trained on only outcome data.

\bibliography{iclr2025_conference}
\bibliographystyle{iclr2025_conference}

\appendix
\section{Proof of Proposition}

\label{sec:appendix_proof}

\begin{proposition}\label{proposition_proof_1} Consider an ORM where the reward is parameterized by the log-likelihood ratio of two causal LMs, i.e. $r_\theta(\mathbf{y}):= \beta \log \frac{\pi_\theta(\mathbf{y})}{\pi_\text{ref}(\mathbf{y})}$. 
Define $q_\theta^t(\mathbf{y}_{<t}, y_t):= \sum_{i=1}^{t} \beta \log \frac{\pi_\theta(y_{i}|\mathbf{y}_{<i})}{\pi_\text{ref}(y_{i}|\mathbf{y}_{<i})}$. 
$q_\theta^t$ is the exponential average of $r_\theta$ at step $t$.
\begin{equation}
q_\theta^{t}(\mathbf{y}_{<t}, y_t) = \beta \log \E_{\pi_\text{ref}(\mathbf{y}|\mathbf{y}_{\leq t})} e^{\frac{1}{\beta}r_\theta(\mathbf{y})}
\end{equation}

\end{proposition}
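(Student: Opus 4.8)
The plan is to reduce both sides to the same ratio of prefix marginals $\pi_\theta(\mathbf{y}_{\leq t})/\pi_\text{ref}(\mathbf{y}_{\leq t})$ and observe they coincide. First I would rewrite the left-hand side using the autoregressive factorization of each causal LM: since $\log \pi_\theta(\mathbf{y}_{\leq t}) = \sum_{i=1}^{t} \log \pi_\theta(y_i \mid \mathbf{y}_{<i})$ and the analogous identity holds for $\pi_\text{ref}$, the telescoping definition of $q_\theta^t$ collapses to $q_\theta^t = \beta \log \frac{\pi_\theta(\mathbf{y}_{\leq t})}{\pi_\text{ref}(\mathbf{y}_{\leq t})}$, where $\pi_\theta(\mathbf{y}_{\leq t})$ denotes the marginal mass that $\pi_\theta$ assigns to all complete responses beginning with the prefix $\mathbf{y}_{\leq t}$.

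Next I would expand the right-hand side. Because the reward is parameterized as a log-likelihood ratio, the integrand simplifies exactly: $e^{\frac{1}{\beta} r_\theta(\mathbf{y})} = \pi_\theta(\mathbf{y})/\pi_\text{ref}(\mathbf{y})$. Writing the conditional reference measure as $\pi_\text{ref}(\mathbf{y} \mid \mathbf{y}_{\leq t}) = \pi_\text{ref}(\mathbf{y})/\pi_\text{ref}(\mathbf{y}_{\leq t})$ for completions $\mathbf{y}$ extending the prefix, the expectation becomes a sum over such completions in which the $\pi_\text{ref}(\mathbf{y})$ factors cancel, leaving $\frac{1}{\pi_\text{ref}(\mathbf{y}_{\leq t})} \sum_{\mathbf{y} \supseteq \mathbf{y}_{\leq t}} \pi_\theta(\mathbf{y})$.

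The crux is the marginalization identity $\sum_{\mathbf{y} \supseteq \mathbf{y}_{\leq t}} \pi_\theta(\mathbf{y}) = \pi_\theta(\mathbf{y}_{\leq t})$, i.e.\ summing $\pi_\theta$ over every complete continuation of the prefix recovers the prefix marginal. This follows from the consistency of the autoregressive chain rule, namely $\sum_{\mathbf{y}_{>t}} \pi_\theta(\mathbf{y}_{>t} \mid \mathbf{y}_{\leq t}) = 1$, and is the one place where normalization over variable-length completions (with the end-of-sequence convention) must be invoked with care; I expect this to be the main obstacle to state rigorously. Substituting the identity yields $\mathbb{E}_{\pi_\text{ref}(\mathbf{y} \mid \mathbf{y}_{\leq t})} e^{\frac{1}{\beta} r_\theta(\mathbf{y})} = \pi_\theta(\mathbf{y}_{\leq t})/\pi_\text{ref}(\mathbf{y}_{\leq t})$, and taking $\beta \log$ of both sides matches the collapsed form of $q_\theta^t$ from the first step, which completes the proof. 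As a sanity check one could alternatively verify the claim by induction on $t$, peeling off one conditional factor at a time, but the direct marginalization argument is the cleaner route.
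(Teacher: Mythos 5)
Your proof is correct, but it takes a genuinely different route from the paper's. The paper argues by backward induction on the step index: the base case is the terminal step $t=T$, where $\mathbf{y}_{\leq T}=\mathbf{y}$ makes the expectation degenerate and the claim reduces to the definition of $r_\theta$, and the inductive step peels off a single token, writing $\E_{\pi_\text{ref}(\mathbf{y}|\mathbf{y}_{\leq t})} = \E_{\pi_\text{ref}(y_{t+1}|\mathbf{y}_{\leq t})}\E_{\pi_\text{ref}(\mathbf{y}|\mathbf{y}_{\leq t+1})}$, substituting the inductive hypothesis, and cancelling $\pi_\text{ref}(y_{t+1}|\mathbf{y}_{\leq t})$ against the likelihood ratio so that only the single-token normalization $\sum_{y_{t+1}}\pi_\theta(y_{t+1}|\mathbf{y}_{\leq t})=1$ is ever invoked. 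You instead collapse both sides in one shot onto the prefix likelihood ratio $\beta\log\bigl(\pi_\theta(\mathbf{y}_{\leq t})/\pi_\text{ref}(\mathbf{y}_{\leq t})\bigr)$: the chain rule on the left, and on the right the cancellation of $\pi_\text{ref}(\mathbf{y})$ inside the conditional expectation followed by the marginalization identity $\sum_{\mathbf{y}\supseteq\mathbf{y}_{\leq t}}\pi_\theta(\mathbf{y})=\pi_\theta(\mathbf{y}_{\leq t})$. The cancellation mechanism is the same in both arguments, but yours operates on whole completions while the paper's operates token by token. What your version buys: it is shorter, and it isolates the one real hypothesis — that $\pi_\theta$ is a properly normalized autoregressive model, i.e.\ continuations of any prefix sum to one under the end-of-sequence convention — which you correctly flag as the delicate point, and which the paper sidesteps by assuming the response has a fixed length of $T$ tokens. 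What the paper's induction buys: it never has to reason about sums over variable-length completions at all, since each step of the recursion only needs normalization over the next token, grounded at the terminal step.
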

\begin{proof}
The Proposition can be proven using mathematical induction.

Suppose response $\mathbf{y}$ has $T$ tokens.

    \textbf{(1)} For $\forall t < T$, if $q_\theta^{t+1}(\mathbf{y}_{<t+1}, y_{t+1}) = \beta \log \E_{\pi_\text{ref}(\mathbf{y}|\mathbf{y}_{\leq {t+1}})} e^{\frac{1}{\beta}r_\theta(\mathbf{y})}$ holds, then $q_\theta^t(\mathbf{y}_{<t}, y_t) = \beta \log \E_{\pi_\text{ref}(\mathbf{y}|\mathbf{y}_{\leq t})} e^{\frac{1}{\beta}r_\theta(\mathbf{y})}$ would also hold.

    \textbf{(2)} At $t=T$, $q_\theta^T(\mathbf{y}_{<T}, y_T) = r_\theta(\mathbf{y}) = \beta \log \E_{\pi_\text{ref}(\mathbf{y}|\mathbf{y}_{\leq T})} e^{\frac{1}{\beta}r_\theta(\mathbf{y})}$.

\textbf{proof of (1):}

\begin{align*}
    \beta \log \E_{\pi_\text{ref}(\mathbf{y}|\mathbf{y}_{\leq t})} e^{\frac{1}{\beta}r_\theta(\mathbf{y})}&= \beta \log \E_{\pi_\text{ref}(\mathbf{y_{t+1}}|\mathbf{y}_{\leq t})} \E_{\pi_\text{ref}(\mathbf{y}|\mathbf{y}_{\leq t+1})} e^{\frac{1}{\beta}r_\theta(\mathbf{y})}  \\
&= \beta \log \E_{\pi_\text{ref}(\mathbf{y_{t+1}}|\mathbf{y}_{\leq t})}  e^{\frac{1}{\beta}q_\theta^{t+1}(\mathbf{y}_{<t+1}, y_{t+1})} \\
&= \beta \log \E_{\pi_\text{ref}(\mathbf{y_{t+1}}|\mathbf{y}_{\leq t})}  \prod_{i=1}^{t+1}   \frac{\pi_\theta(y_{i}|\mathbf{y}_{<i})}{\pi_\text{ref}(y_{i}|\mathbf{y}_{<i})} \\
&= \beta \log \prod_{i=1}^{t}   \frac{\pi_\theta(y_{i}|\mathbf{y}_{<i})}{\pi_\text{ref}(y_{i}|\mathbf{y}_{<i})} \E_{\pi_\text{ref}(\mathbf{y_{t+1}}|\mathbf{y}_{\leq t})}    \frac{\pi_\theta(y_{t+1}|\mathbf{y}_{\leq t})}{\pi_\text{ref}(y_{t+1}|\mathbf{y}_{\leq t})} \\
&= \beta \log \prod_{i=1}^{t}   \frac{\pi_\theta(y_{i}|\mathbf{y}_{<i})}{\pi_\text{ref}(y_{i}|\mathbf{y}_{<i})} \sum_{y_{t+1}}{\pi_\text{ref}(y_{t+1}|\mathbf{y}_{\leq t})    \frac{\pi_\theta(y_{t+1}|\mathbf{y}_{\leq t})}{\pi_\text{ref}(y_{t+1}|\mathbf{y}_{\leq t})}} \\
&= \beta \log \prod_{i=1}^{t}   \frac{\pi_\theta(y_{i}|\mathbf{y}_{<i})}{\pi_\text{ref}(y_{i}|\mathbf{y}_{<i})} \sum_{y_{t+1}}{\pi_\theta(y_{t+1}|\mathbf{y}_{\leq t})} \\
&= \beta \log \prod_{i=1}^{t}   \frac{\pi_\theta(y_{i}|\mathbf{y}_{<i})}{\pi_\text{ref}(y_{i}|\mathbf{y}_{<i})}\\
\end{align*}

\textbf{proof of (2):}

The conclusion is straightforward. Since $\pi$ is autoregressive, we have
\begin{equation*}
    r_\theta(\mathbf{y}) := \beta \log \frac{\pi_\theta(\mathbf{y})}{\pi_\text{ref}(\mathbf{y})} =  \beta \log  \prod_{i=1}^{T} \frac{\pi_\theta(y_{i}|\mathbf{y}_{<i})}{\pi_\text{ref}(y_{i}|\mathbf{y}_{<i})} = \sum_{i=1}^{T} \beta \log   \frac{\pi_\theta(y_{i}|\mathbf{y}_{<i})}{\pi_\text{ref}(y_{i}|\mathbf{y}_{<i})}.
\end{equation*}
Since $\mathbf{y}_{\leq T} = \mathbf{y}$, the expectation $\E_{\pi_\text{ref}(\mathbf{y}|\mathbf{y}_{\leq T})}$ can be removed:
\begin{equation*}
    \beta \log \E_{\pi_\text{ref}(\mathbf{y}|\mathbf{y}_{\leq T})} e^{\frac{1}{\beta}r_\theta(\mathbf{y})}=\beta \log e^{\frac{1}{\beta}r_\theta(\mathbf{y})} = r_\theta(\mathbf{y}).
\end{equation*}
\end{proof}

\end{document}